\documentclass[conference]{IEEEtran}
\IEEEoverridecommandlockouts
\usepackage{cite}
\usepackage{amsmath,amssymb,amsfonts,amsthm}
\usepackage{graphicx}
\usepackage{textcomp}
\usepackage{xcolor}
\usepackage{bbm}
\usepackage{dsfont}
\usepackage{comment}

\usepackage{algorithm}
\usepackage[noend]{algpseudocode}

\usepackage{float}

\newtheorem{assumption}{Assumption}
\newtheorem{proposition}{Proposition}

\def\BibTeX{{\rm B\kern-.05em{\sc i\kern-.025em b}\kern-.08em
    T\kern-.1667em\lower.7ex\hbox{E}\kern-.125emX}}

\begin{document}

\title{ATOM: Abstract Transition and Option Modeling - Experience-Informed Hierarchical Navigation \\
}

\author{\IEEEauthorblockN{Changxuan Yao and Panagiotis Tsiotras}
\IEEEauthorblockA{Institute for Robotics and Intelligent Machines\\ 
Georgia Institute of Technology,
Atlanta, GA \\
\{cyao48, tsiotras\}@gatech.edu}
}

\maketitle

\begin{abstract}
We propose ATOM (Abstract Transition and Option Modeling), a hierarchical navigation framework that combines higher-level graph planning with reusable lower-level option policies for efficient repeated navigation in large-scale 3D environments represented by hierarchical structures such as 3D scene graphs (3DSGs). 
ATOM operates on a hierarchical graph representation, implemented using, for instance, the \textsf{Rooms} and \textsf{Places} layers of a 3DSG, where each directed abstract transition is associated with a navigation option. 
For each transition, ATOM computes and stores a reusable option policy along with its state-dependent cost-to-go. 
Across repeated navigation tasks, ATOM maintains a smoothed empirical distribution over option initiation states and uses it to progressively refine the corresponding abstract transition cost. 
Ablation studies show that ATOM reduces cumulative repeated-planning time by 97.5\% relative to a monolithic MDP baseline, while maintaining trajectories within approximately 7\% of a Global Places A* shortest-path reference, while refinement reduces abstract transition-cost estimation error by 10.3\%.
Physical experiments further validate ATOM on an F1Tenth vehicle, successfully completing all five navigation queries with 43\% of
inter-region option executions reusing previously generated solutions.
\end{abstract}

\begin{IEEEkeywords}
Hierarchical navigation, hierarchical planning, 3D scene graphs, Markov decision processes, motion and path planning
\end{IEEEkeywords}

\section{Introduction}

Autonomous agents must navigate reliably in complex real-world environments, including self-driving vehicles \cite{auto-vehicles,self-driving}, delivery systems \cite{delivery-bots, delivery-2}, disaster response \cite{sar, disaster_response} and surveillance \cite{surv, monitoring}. 
Efficient navigation is essential in large-scale environments, where re-planning is required as new navigation tasks arise. Although classical graph search and MDP-based planning methods can compute paths and navigation policies, respectively, solving each new start-goal query independently does not leverage planning solutions from previous queries, potentially incurring unnecessary computation by repeatedly solving the same local navigation subproblems. In structured environments, many navigation queries traverse the same local connections, creating opportunities for reuse. This motivates a hierarchical approach in which local policies can be reused across start-goal queries rather than recomputed for each task.

\begin{figure}[t!]
\centerline{\includegraphics[scale=0.33]{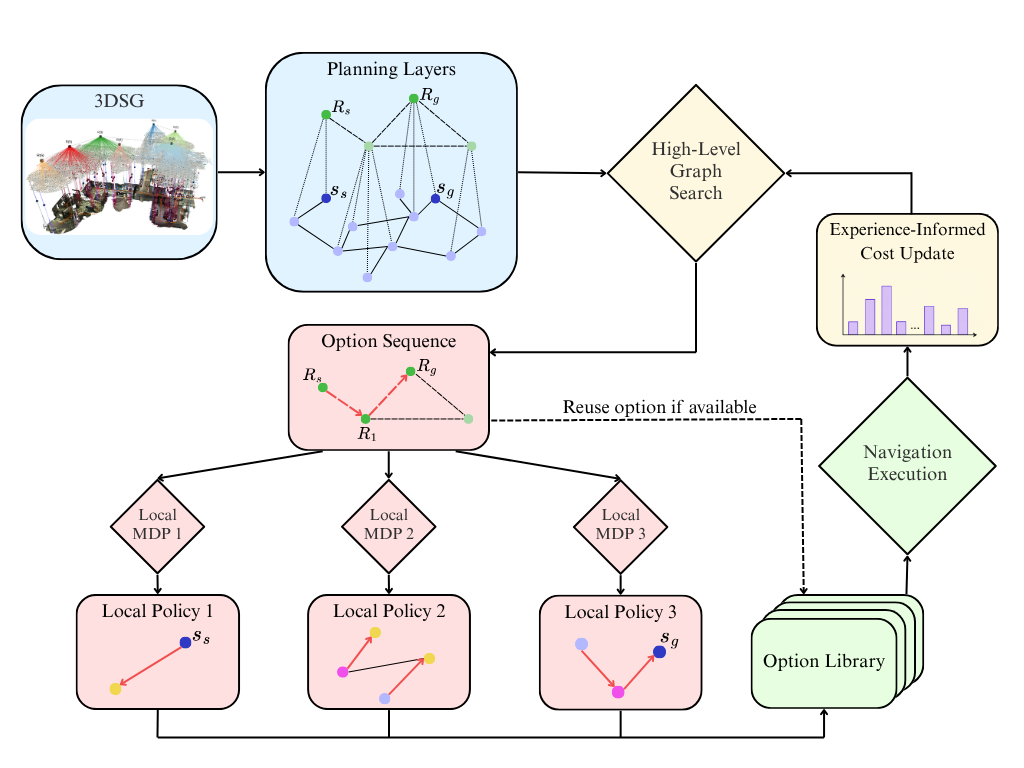}}
\caption{ATOM framework. Abstract transitions are represented by reusable options, while navigation experience refines transition costs for subsequent high-level planning.}
\label{framework}
\end{figure}

Hierarchical graph representations, such as 3D scene graphs (3DSGs), organize environments at multiple levels of abstraction and provide a natural foundation for hierarchical planning. Our objective is to take advantage of these hierarchical representations to identify recurring abstract transitions and reuse their local planning solutions across navigation tasks. We therefore propose ATOM, a hierarchical navigation framework implemented using the \textsf{Rooms} and \textsf{Places} layers of a 3DSG generated by Hydra \cite{hydra} for urban environments.
Hierarchical planning decomposes large planning problems into local subproblems \cite{multi-hier,hier_state_abs}, while options provide temporally extended actions for connecting decisions across abstraction levels \cite{sutton,scott,rodriguez}. An option consists of an initiation set, a policy, and a termination condition, defining where it can begin, how it executes, and when it ends \cite{sutton}. ATOM associates each directed abstract transition with a reusable navigation option whose policy operates on the lower-level graph.

As illustrated in Figure~\ref{framework}, ATOM uses graph search (e.g., A*) to select a sequence of abstract transitions. For each such transition, a local MDP is solved once to obtain the corresponding policy and state-dependent cost-to-go, which are stored for reuse. As options are executed across navigation tasks, their smoothed empirical initiation-state distributions reweight the stored state-dependent cost-to-go, refining abstract transition costs for subsequent higher-level planning.

The main contributions of this work are:

\begin{itemize}
\item A hierarchical navigation framework (ATOM) that represents directed abstract transitions as reusable navigation options, avoiding repeated local policy generation.
\item An experience-informed method that refines abstract transition costs using smoothed empirical initiation-state distributions and stored state-dependent cost-to-go.
\item Theoretical analysis and simulation ablation studies together with physical-robot validation in a real indoor environment.
\end{itemize}

\section{Related Work}

\subsection{3D Scene Graphs}

3D scene graphs (3DSGs) represent environments as relational graphs containing geometric, semantic, and spatial information at multiple levels of abstraction \cite{sg_survey,3d_sg}. Armeni et al. \cite{3d_sg} introduced hierarchical 3DSGs for indoor environments, and Rosinol et al. \cite{kimera} extended this representation to dynamic scenes. Hydra \cite{hydra} enabled real-time construction and optimization of hierarchical 3DSGs, while CURB-SG \cite{curbsg} extended scene-graph representations to collaborative urban environments. Ray et al. \cite{ray} construct sparse task-and-motion planning domains from hierarchical 3DSGs, while SlideSLAM \cite{slide_slam} uses sparse metric-semantic representations for multi-robot navigation. 
ATOM, instead, uses an existing hierarchical graph as the planning abstraction. Each directed abstract transition is represented by a reusable option whose cost is refined from navigation experience. This enables lower-level planning solutions associated with recurring abstract transitions to be reused across navigation tasks, reducing repeated planning computation.

\subsection{Hierarchical Navigation}

Hierarchical planning is a well-studied field. 
It works by reducing planning complexity through abstract representations and lower-level subproblems \cite{abstrips,abstraction_theory}. 
Fernandez-Madrigal and Gonzalez \cite{multi-hier} applied this principle to multi-hierarchical graph search, recursively refining paths across abstraction levels. Hierarchical Path-Finding A* (HPA*) partitions maps into local clusters and precomputes crossing distances for higher-level search \cite{hpa}. HPA* reuses precomputed local distances, whereas ATOM reuses lower-level option policies and their state-dependent cost-to-go, while refining abstract transition costs from observed option initiation states.

Hierarchical MDP methods decompose sequential decision problems into subtasks. The options framework \cite{sutton} formalizes temporally extended actions, while MAXQ provides a hierarchical value-function decomposition \cite{maxq}. Guan et al. \cite{scott} use options for hierarchical problem decomposition, while Konidaris et al. \cite{skills_symbols} connect high-level actions to abstract representations for planning. Larsson et al. \cite{hier_state_abs} consider computational effort when selecting among abstraction levels, and Rodriguez-Sanchez and Konidaris \cite{rodriguez} learn abstract world models for planning with options. ATOM operates on a given hierarchical representation and computes each lower-level option policy once. Navigation experience then refines its abstract transition cost by reweighting the stored state-dependent cost-to-go according to the smoothed empirical initiation-state distribution. This allows ATOM to reuse lower-level planning solutions across navigation tasks while adapting higher-level transition costs from experience, avoiding repeated option-policy computation as the abstract model is refined.

\section{Methodology}

\subsection{Problem Formulation and Framework Overview}

ATOM considers repeated navigation over a hierarchical graph consisting of a higher-level graph $\mathcal{G}_R=(\mathcal{R},\mathcal{E}_R)$ and a lower-level graph $\mathcal{G}_P=(\mathcal{S},\mathcal{E}_P)$. 
Each abstract \textit{region} $R_i\in\mathcal{R}$ contains lower-level \textit{states} $S_i\subset\mathcal{S}$, 
while $\{R_i,R_j\}\in\mathcal{E}_R$ indicates connectivity between adjacent regions. Given a start-goal query $(s_{\mathrm{s}},s_{\mathrm{g}})$, let $R_{\mathrm{s}}$ and $R_{\mathrm{g}}$ denote the regions containing $s_{\mathrm{s}}$ and $s_{\mathrm{g}}$, respectively. ATOM reuses previously computed local planning solutions to avoid recomputing recurring local navigation subproblems across queries.

For each query, the start and goal states are mapped to their corresponding regions, and A* searches for a sequence of abstract transitions. Each traversal $R_i\rightarrow R_j$ is represented by a directed option $o_{ij}$. 
When $o_{ij}$ is first encountered, its 
policy and state-dependent cost-to-go are computed and stored. Subsequent executions reuse this solution and refine the abstract transition cost from observed initiation states.

\subsection{Hierarchical Graph Representation}

ATOM is applicable to hierarchical graph representations satisfying the structure above. In this work, as a concrete example, $\mathcal{G}_R$ and $\mathcal{G}_P$ are instantiated by the \textsf{Rooms} and \textsf{Places} layers, respectively, of a 3DSG generated by Hydra \cite{hydra}. Each \textsf{Room} $R_i$ is associated with $S_i$, with Room and Places edges representing regional connectivity and lower-level traversability, respectively. We assume that the hierarchical graph topology and lower-level transition model remain fixed during repeated navigation, allowing the option policies to be computed once and reused. The abstract transition costs, however, are refined from navigation experience as described in Section~\ref{sec:transition_refinement}.

\subsection{Transition Options}

ATOM associates each abstract adjacency $\{R_i,R_j\}\in\mathcal{E}_R$ with directed navigation options. Although $\mathcal{G}_R$ is undirected, opposite traversals define distinct options $o_{ij}$ and $o_{ji}$, allowing each direction to maintain its own policy, cost-to-go, and experience. For $R_i\rightarrow R_j$, the initiation set is $\mathcal{I}_{ij}=S_i$.
The terminal set containing states in $R_j$ directly connected to $R_i$ is $\mathcal{F}_{ij}=
\left\{s'\in S_j \mid \exists s\in S_i: (s,s')\in\mathcal{E}_P \right\}$. Following the options framework \cite{sutton}, termination is defined by
\begin{equation}
    \beta_{ij}(s)=
    \begin{cases}
        1, & s\in\mathcal{F}_{ij},\\
        0, & \text{otherwise}.
    \end{cases}
    \label{eq:termination}
\end{equation}
The directed option is defined as $o_{ij}=\left\langle\mathcal{I}_{ij},\pi_{ij},\beta_{ij}\right\rangle$, where $\pi_{ij}$ drives the agent from $\mathcal{I}_{ij}$ to $\mathcal{F}_{ij}$. Once selected, $\pi_{ij}$ executes until $\mathcal{F}_{ij}$ is reached.

\subsection{Local Option Policy Generation}

When $o_{ij}$ is encountered for the first time, ATOM constructs a local MDP $\mathcal{M}_{ij}=\left\langle\mathcal{X}_{ij},\mathcal{A}_{ij},P_{ij},c_{ij},\mathcal{F}_{ij}\right\rangle$, with state space
$\mathcal{X}_{ij}=S_i \cup \mathcal{F}_{ij}$, where $\mathcal{A}_{ij}(s)$ contains traversable lower-level edges from $s$, and $P_{ij}(s'|s,a)$ is the deterministic transition model. 
Each transition incurs a Euclidean traversal cost $c_{ij}(s,a,s')=\left\|\mathbf{p}(s')-\mathbf{p}(s)\right\|_2$, where $\mathbf{p}(s)$ is the spatial position of $s$. 
Terminal states $f\in\mathcal{F}_{ij}$ are absorbing.

We use value iteration to solve $\mathcal{M}_{ij}$ to obtain the policy, $\pi_{ij}$, and the corresponding cost-to-go
\begin{equation}
    J_{ij}(s)
    =
    \min_{\pi}
    \mathbb{E}_{\pi}
    \left[
        \sum_{k=0}^{\tau_{ij}-1}
        c_{ij}(s_k,a_k,s_{k+1})
        \,\middle|\,
        s_0=s
    \right],
    \label{eq:cost_to_go}
\end{equation}
where $\tau_{ij}$ is the first hitting time of $\mathcal{F}_{ij}$, with $J_{ij}(f)=0$ for all $f\in\mathcal{F}_{ij}$.

After $\mathcal{M}_{ij}$ is solved, $\pi_{ij}$ and $J_{ij}$ remain fixed and are stored for reuse. 
For this purpose,
ATOM maintains an option library $\mathcal{O}$; once $o_{ij}$ is generated, $\pi_{ij}$ and $J_{ij}$ are reused whenever the same directed transition is selected again, avoiding repeated solution of $\mathcal{M}_{ij}$.

\subsection{Experience-Informed Transition Refinement}
\label{sec:transition_refinement}

The cost of an abstract transition depends on the state from which its option is initiated. Rather than assuming all initiation states are equally likely, ATOM estimates an option-specific initiation-state distribution from repeated executions. Let $n_{ij}^{(t)}(s)$ denote the number of times $o_{ij}$ has been initiated from $s\in S_i$ after $t$ executions. With symmetric Dirichlet pseudocount $\alpha>0$, the smoothed empirical distribution is
\begin{equation}
    \mu_{ij}^{(t)}(s)
    =
    \frac{
        \alpha+n_{ij}^{(t)}(s)
    }{
        \alpha|S_i|
        +
        \sum_{x\in S_i}n_{ij}^{(t)}(x)
    }.
    \label{eq:init_dist}
\end{equation}
We use $\alpha=1$, yielding a uniform prior before any executions. The experience-informed abstract transition cost is then
\begin{equation}
    C_{ij}^{(t)}
    =
    \sum_{s\in S_i}
    \mu_{ij}^{(t)}(s)J_{ij}(s).
    \label{eq:abstract_cost}
\end{equation}
When $o_{ij}$ is initiated from $s_{\mathrm{start}}$, its count is updated as
\begin{equation}
    n_{ij}^{(t+1)}(s)
    =
    n_{ij}^{(t)}(s)
    +
    \mathbbm{1}\{s=s_{\mathrm{start}}\},
    \quad s\in S_i,
    \label{eq:count_update}
\end{equation}
after which $\mu_{ij}$ and $C_{ij}$ are recomputed while $\pi_{ij}$ and $J_{ij}$ remain fixed. Thus, experience refines the abstract transition model by reweighting stored state-dependent costs rather than resolving the local MDP.

Before $o_{ij}$ is generated, its abstract edge cost is initialized by the Euclidean distance between room centroids. Once $J_{ij}$ becomes available, this is replaced by $C_{ij}^{(0)}$ and subsequently refined through option executions. Updated costs are used by A* for subsequent queries.

\subsection{Repeated Navigation}

Algorithm~\ref{alg:atom} summarizes the proposed repeated navigation sequence. For each query, A* searches $\mathcal{G}_R$ using the current directed transition costs and a Euclidean room-centroid heuristic. Missing options along the selected route are generated concurrently, while previously generated options reuse their stored $\pi_{ij}$ and $J_{ij}$. Options are then executed sequentially, with $n_{ij}$, $\mu_{ij}$, and $C_{ij}$ updated after each execution for use in subsequent queries; the current abstract route is not replanned.

After the final option reaches the goal region $R_{\mathrm{g}}$, A* is applied to the \textsf{Places} subgraph within $R_{\mathrm{g}}$ to connect the current state to the query-specific goal $s_{\mathrm{g}}$. If $s_{\mathrm{s}}$ and $s_{\mathrm{g}}$ lie in the same region, this local A* search is applied directly without generating or executing inter-region options.

\begin{algorithm}[t]
\caption{ATOM Repeated Navigation}
\label{alg:atom}
\begin{algorithmic}[1]
\Require $\mathcal{G}_R,\mathcal{G}_P$, queries $\mathcal{Q}=\{(s_{\mathrm{s}}^q,s_{\mathrm{g}}^q)\}_{q=1}^{Q}$
\State $\mathcal{O}\gets\emptyset$; initialize $C_{ij}$ by room-centroid distance
\For{$q=1,\ldots,Q$}
    \State Map $(s_{\mathrm{s}}^q,s_{\mathrm{g}}^q)$ to $(R_{\mathrm{s}},R_{\mathrm{g}})$
    \State $P_R\gets\textsc{A*}(\mathcal{G}_R,R_{\mathrm{s}},R_{\mathrm{g}},\{C_{ij}\})$
    \State $\mathcal{U}\gets\{o_{ij}\mid(R_i\rightarrow R_j)\in P_R,\;o_{ij}\notin\mathcal{O}\}$
    \ForAll{$o_{ij}\in\mathcal{U}$ \textbf{in parallel}}
        \State Compute $\mathcal{F}_{ij}$; construct and solve $\mathcal{M}_{ij}$
        \State Store $\pi_{ij},J_{ij}$; initialize $n_{ij}(s)\gets0$
        \State Compute $\mu_{ij}$ and $C_{ij}$ using \eqref{eq:init_dist}--\eqref{eq:abstract_cost}
        \State $\mathcal{O}\gets\mathcal{O}\cup\{o_{ij}\}$
    \EndFor
    \For{each $R_i\rightarrow R_j$ in $P_R$}
        \State $s_{\mathrm{start}}\gets$ current lower-level state
        \State Execute $\pi_{ij}$ until reaching $\mathcal{F}_{ij}$
        \State $n_{ij}(s_{\mathrm{start}})\gets n_{ij}(s_{\mathrm{start}})+1$
        \State Update $\mu_{ij}$ and $C_{ij}$ using \eqref{eq:init_dist}--\eqref{eq:abstract_cost}
    \EndFor
    \State Compute and execute an A* path within $R_{\mathrm{g}}$ to $s_{\mathrm{g}}^q$
\EndFor
\end{algorithmic}
\end{algorithm}

\section{Theoretical Properties}

We establish three properties of ATOM relevant to repeated navigation. First, the number of option MDPs that must be solved is bounded by the number of directed abstract transitions, independently of the number of navigation queries. 
Second, experience-informed refinement keeps each abstract transition cost within the range of its stored state-dependent cost-to-go. Finally, if the long-run empirical initiation-state frequencies stabilize, the refined transition cost converges to the expected cost-to-go under this limiting distribution.

\begin{proposition}[Bounded option generation]
For a fixed hierarchical graph, each directed option MDP is solved at most once. Hence, across any number of repeated navigation queries,
\begin{equation}
    N_{\mathrm{MDP}}^{\mathrm{opt}}
    =
    D
    \leq
    2|\mathcal{E}_R|,
    \label{eq:num_mdp_solves}
\end{equation}
where $D$ is the number of distinct directed options encountered.
\label{prop:reuse}
\end{proposition}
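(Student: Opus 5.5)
The argument reads Algorithm~\ref{alg:atom} as a state machine over the option library $\mathcal{O}$ and shows that $\mathcal{O}$ only grows and is indexed by directed abstract transitions. Three points need to be established in order: (i) a local MDP $\mathcal{M}_{ij}$ is solved only inside the parallel loop over $\mathcal{U}$; (ii) every solved option is immediately added to $\mathcal{O}$ and never removed; (iii) the set of possible indices $(i,j)$ is finite with cardinality $2|\mathcal{E}_R|$.

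For (i)--(ii), I would fix a directed option $o_{ij}$ and suppose it is selected in queries $q_1<q_2<\cdots$. At the first query $q_1$, $o_{ij}\notin\mathcal{O}$, so $o_{ij}\in\mathcal{U}$, and $\mathcal{M}_{ij}$ is solved exactly once in that iteration. Here $\mathcal{U}$ is a set, so duplicates of the same directed transition within one route produce a single solve. Afterwards $\mathcal{O}\gets\mathcal{O}\cup\{o_{ij}\}$. No line of the algorithm removes elements from $\mathcal{O}$, so $o_{ij}\in\mathcal{O}$ for all later queries. Hence $o_{ij}\notin\mathcal{U}$ at $q_2,q_3,\dots$, and the stored $\pi_{ij},J_{ij}$ are reused. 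The refinement updates \eqref{eq:count_update}, \eqref{eq:init_dist}, and \eqref{eq:abstract_cost} change only $n_{ij},\mu_{ij},C_{ij}$ and never trigger a re-solve. This relies on the standing assumption that the hierarchical graph topology and the lower-level transition model are fixed: $\mathcal{X}_{ij}$, $\mathcal{F}_{ij}$, and $c_{ij}$ are determined by $\mathcal{G}_P$ and the partition $\{S_i\}$, so the stored solution remains valid. The same-region case performs no option generation and contributes nothing to the count. Summing one solve per distinct directed option encountered gives $N_{\mathrm{MDP}}^{\mathrm{opt}}=D$.

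For (iii), any option appearing in $\mathcal{U}$ comes from a step $R_i\to R_j$ of an A* route $P_R$ on $\mathcal{G}_R$, which requires $\{R_i,R_j\}\in\mathcal{E}_R$. Each undirected edge yields exactly two directed options, $o_{ij}$ and $o_{ji}$, so the set of all possible directed options has size $2|\mathcal{E}_R|$. Therefore $D\le 2|\mathcal{E}_R|$, and the bound is independent of $Q$.

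The only delicate step is the parallel generation within a single query together with the claim ``at most once.'' I would argue that $\mathcal{U}$ is computed before the parallel loop as a set difference, so each element is solved by one worker. I would also state explicitly that A* only uses edges of $\mathcal{G}_R$, which holds since it searches that graph. Everything else is direct bookkeeping.
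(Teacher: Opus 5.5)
Your proposal is correct and follows the same argument as the paper: each undirected edge $\{R_i,R_j\}\in\mathcal{E}_R$ induces at most two directed options, and once an option has been generated its stored $\pi_{ij},J_{ij}$ are reused rather than re-solved. Your version simply spells out bookkeeping the paper leaves implicit, namely that $\mathcal{O}$ only grows, that $\mathcal{U}$ is a set difference fixed before the parallel loop, and that A* routes only use edges of $\mathcal{G}_R$.
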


\begin{proof}
Each undirected edge $\{R_i,R_j\}\in\mathcal{E}_R$ induces at most two directed options, $o_{ij}$ and $o_{ji}$. Once generated, an option reuses its stored $\pi_{ij}$ and $J_{ij}$ without resolving $\mathcal{M}_{ij}$.
\end{proof}

Proposition~\ref{prop:reuse} shows that option-generation effort is bounded by the abstract graph topology rather than the number of repeated navigation queries.

\begin{proposition}[Bounded transition-cost refinement]
For every directed option $o_{ij}$ and execution count $t$,
\begin{equation}
    \min_{s\in S_i}J_{ij}(s)
    \leq
    C_{ij}^{(t)}
    \leq
    \max_{s\in S_i}J_{ij}(s).
    \label{eq:cost_bounds}
\end{equation}
\label{prop:bounded_cost}
\end{proposition}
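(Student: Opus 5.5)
The plan is to show that $C_{ij}^{(t)}$ is a convex combination of the values $\{J_{ij}(s)\}_{s\in S_i}$. Once this is established, \eqref{eq:cost_bounds} follows directly, since a weighted average with nonnegative weights summing to one can be no smaller than the minimum and no larger than the maximum of the averaged quantities.

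The first step checks that $\mu_{ij}^{(t)}$ is a probability distribution on $S_i$ for every $t$.

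Nonnegativity comes from the counts. By \eqref{eq:count_update}, starting from $n_{ij}^{(0)}(s)=0$, every count $n_{ij}^{(t)}(s)$ is a nonnegative integer. Since $\alpha>0$, each numerator $\alpha+n_{ij}^{(t)}(s)$ in \eqref{eq:init_dist} is strictly positive. The denominator $\alpha|S_i|+\sum_{x\in S_i}n_{ij}^{(t)}(x)$ is then also strictly positive, provided $S_i$ is nonempty. I will note that nonemptiness is implicit: the option $o_{ij}$ has initiation set $\mathcal{I}_{ij}=S_i$ and is executed from some state in it.

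Normalization comes from summing the numerators over $s\in S_i$. This gives $\alpha|S_i|+\sum_{s}n_{ij}^{(t)}(s)$, which is exactly the denominator, so $\sum_{s\in S_i}\mu_{ij}^{(t)}(s)=1$.

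The second step checks that every $J_{ij}(s)$, $s\in S_i$, is a finite real number, so that the minimum and maximum in \eqref{eq:cost_bounds} are well defined. This is the only point needing care.

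Each $c_{ij}\ge 0$, so $J_{ij}(s)\ge 0$. Finiteness requires that $\mathcal{F}_{ij}$ be reachable from every $s\in S_i$ within $\mathcal{X}_{ij}$. I would argue this from the construction: $\mathcal{F}_{ij}$ is nonempty because $R_i$ and $R_j$ are adjacent, and the \textsf{Places} subgraph within a room is connected. If connectivity cannot be taken from the paper, I would state it as a standing assumption, or else interpret the bounds in the extended reals, where they still hold trivially.

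Finally, $S_i$ is finite, so the minimum and maximum are attained.

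The last step is the bound itself. Write $m=\min_{s}J_{ij}(s)$ and $M=\max_s J_{ij}(s)$. Using $\mu_{ij}^{(t)}(s)\ge0$, we have $m\,\mu_{ij}^{(t)}(s)\le \mu_{ij}^{(t)}(s)J_{ij}(s)\le M\,\mu_{ij}^{(t)}(s)$ for each $s$. Summing over $s\in S_i$ and applying the normalization gives $m\le C_{ij}^{(t)}\le M$ by \eqref{eq:abstract_cost}.

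This argument applies for every $t\ge0$, including $t=0$ with the uniform prior. It uses no property of the count trajectory beyond nonnegativity.
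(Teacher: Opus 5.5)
Your proposal is correct and follows the paper's proof: both observe that $\mu_{ij}^{(t)}$ is a probability distribution on $S_i$, so $C_{ij}^{(t)}$ is a convex combination of the stored values $J_{ij}(s)$ and must lie between their minimum and maximum. Your extra checks (nonnegativity and normalization of $\mu_{ij}^{(t)}$, nonemptiness of $S_i$, and finiteness of $J_{ij}$ with the extended-real fallback) make explicit what the paper's one-line argument takes for granted.
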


\begin{proof}
Since $\mu_{ij}^{(t)}$ is a probability distribution over $S_i$, $C_{ij}^{(t)}=\sum_s\mu_{ij}^{(t)}(s)J_{ij}(s)$ is a convex combination of the stored values $J_{ij}(s)$.
\end{proof}

Thus, refinement reweights the stored local costs without producing an abstract transition cost outside their range.

The previous two propositions
characterize the bounded computational effort and stability of ATOM's reuse mechanism. To characterize the long-term behavior of the experience-informed transition-cost refinement, we additionally require that the initiation pattern of a repeatedly executed option eventually stabilizes.

\begin{assumption}[Stabilizing initiation frequencies]
For each directed option $o_{ij}$,
the empirical initiation-state frequencies converge, i.e.,
\begin{equation}
    \frac{n_{ij}^{(t)}(s)}{t}
    \rightarrow q_{ij}(s),
    \qquad
    \forall s\in S_i,
    \label{eq:empirical_frequency_convergence}
\end{equation}
where $t$ is the number of executions of $o_{ij}$ and $q_{ij}$ is the limiting initiation-state distribution.
\label{assump:freq_convergence}
\end{assumption}

Assumption~\ref{assump:freq_convergence} represents repeated-navigation settings in which the long-run pattern of option initiation stabilizes. This condition holds, for example, when initiation states are sampled from a stationary distribution.

\begin{proposition}[Consistency of transition-cost refinement]
\label{prop:convergence}
Under Assumption~\ref{assump:freq_convergence}, for any fixed $\alpha>0$,
\begin{equation}      \label{eq:cost_convergence}
    \mu_{ij}^{(t)}(s)
    \rightarrow
    q_{ij}(s),
    \qquad
    C_{ij}^{(t)}
    \rightarrow
    \sum_{s\in S_i}
    q_{ij}(s)J_{ij}(s),
\end{equation}
where $q_{ij}$ as in Assumption~\ref{assump:freq_convergence}.
\end{proposition}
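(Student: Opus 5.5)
The argument is a direct limit computation on the closed form \eqref{eq:init_dist}, followed by linearity of the finite sum \eqref{eq:abstract_cost}. The first step is to note that each execution of $o_{ij}$ increments exactly one count in \eqref{eq:count_update}. Since the counts start at zero when the option is generated, we get $\sum_{x\in S_i} n_{ij}^{(t)}(x)=t$ for every execution count $t$. This identity is what lets Assumption~\ref{assump:freq_convergence} be used directly, because it normalizes the denominator by the same $t$ that appears in \eqref{eq:empirical_frequency_convergence}.

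Next, for $t\ge 1$, I would divide numerator and denominator of \eqref{eq:init_dist} by $t$:
\begin{equation*}
    \mu_{ij}^{(t)}(s)
    =
    \frac{\alpha/t + n_{ij}^{(t)}(s)/t}{\alpha|S_i|/t + 1}.
\end{equation*}
Because $\alpha>0$ is fixed and $|S_i|$ is finite, $\alpha/t\to 0$ and $\alpha|S_i|/t\to 0$. Together with Assumption~\ref{assump:freq_convergence}, the numerator tends to $q_{ij}(s)$ and the denominator tends to $1$, so $\mu_{ij}^{(t)}(s)\to q_{ij}(s)$ for each $s\in S_i$. Summing over $s$ also shows that $q_{ij}$ is a genuine probability distribution on $S_i$. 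This is consistent with the assumption calling it a limiting initiation-state distribution, and it holds automatically since each $n_{ij}^{(t)}(s)/t\ge0$ and these ratios sum to $1$.

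For the cost, $C_{ij}^{(t)}=\sum_{s\in S_i}\mu_{ij}^{(t)}(s)J_{ij}(s)$ is a finite sum. The stored values $J_{ij}(s)$ are fixed and do not depend on $t$, since the local MDP is solved once (Proposition~\ref{prop:reuse}). They are also finite, because each $s\in S_i$ is assumed to reach $\mathcal{F}_{ij}$, which is implicit in $\pi_{ij}$ being well defined. Pointwise convergence of the weights therefore gives convergence of the weighted sum to $\sum_s q_{ij}(s)J_{ij}(s)$. A quantitative version follows from
\begin{equation*}
    |C_{ij}^{(t)}-\textstyle\sum_s q_{ij}(s)J_{ij}(s)| \le \max_{s}J_{ij}(s)\,\|\mu_{ij}^{(t)}-q_{ij}\|_1 .
\end{equation*}

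The only real subtlety is bookkeeping rather than analysis. Here $t$ counts executions of this particular option, not queries, and the statement concerns the limit $t\to\infty$, which is meaningful only for options executed infinitely often. I would state this explicitly. I would also make sure the finiteness of $J_{ij}$ on all of $S_i$ is justified, by requiring that $\mathcal{F}_{ij}$ be reachable from every state of $S_i$. Otherwise the limit could involve an infinite cost weighted by zero, and the claimed convergence would need that caveat.
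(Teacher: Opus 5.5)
Your proposal is correct and follows the paper's proof essentially step for step. Both use the identity $\sum_{x\in S_i}n_{ij}^{(t)}(x)=t$, divide the numerator and denominator of \eqref{eq:init_dist} by $t$ so the prior terms vanish, and then pass the limit through the finite sum \eqref{eq:abstract_cost} with $J_{ij}$ fixed. Your additional remarks are sound refinements that the paper leaves implicit: the $\ell_1$ error bound, the point that $t$ counts executions of that particular option, and the requirement that $J_{ij}$ be finite on all of $S_i$.
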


\begin{proof}
Since $\sum_{x\in S_i}n_{ij}^{(t)}(x)=t$,
\begin{equation}
    \mu_{ij}^{(t)}(s)
    =
    \frac{
        \alpha/t+n_{ij}^{(t)}(s)/t
    }{
        \alpha|S_i|/t+1
    }.
\end{equation}
As $t\rightarrow\infty$, the prior terms vanish and Assumption~\ref{assump:freq_convergence} yields $\mu_{ij}^{(t)}(s)\rightarrow q_{ij}(s)$. 
Since $S_i$ is finite and $J_{ij}$ remains fixed, substituting this limit into \eqref{eq:abstract_cost} yields \eqref{eq:cost_convergence}.
\end{proof}

Proposition~\ref{prop:convergence} shows that, when the long-run initiation pattern stabilizes, the influence of the initial smoothing prior vanishes and the refined abstract transition cost converges to the expected stored cost-to-go under the limiting initiation-state distribution. Thus, with repeated experience, the abstract cost becomes representative of where the option is actually initiated rather than of the initial uniform estimate.

\section{Experiments and Results}
\label{sec:experiments}

\subsection{Experimental Setup}

We evaluated ATOM through repeated-navigation simulations, a controlled transition-refinement experiment, and physical-robot experiments. The main simulation uses the uHumans2 office environment \cite{uHumans2}, with a hierarchical 3DSG generated by Hydra \cite{hydra}, shown in Figure~\ref{fig:uhumans2_sg}, containing 9 \textsf{Rooms}, 492 \textsf{Places}, and 9 undirected Room edges (18 possible directed options). Experiments run in Python on Ubuntu 24.04 with an Intel Core i5-14600K CPU, NVIDIA RTX 4060 Ti 8 GB GPU, and 32 GB RAM. Value iteration uses synchronous Bellman updates with tolerance $10^{-9}$ and a maximum of $10^4$ iterations.

\begin{figure}[t]
    \centering
    \includegraphics[width=\columnwidth]{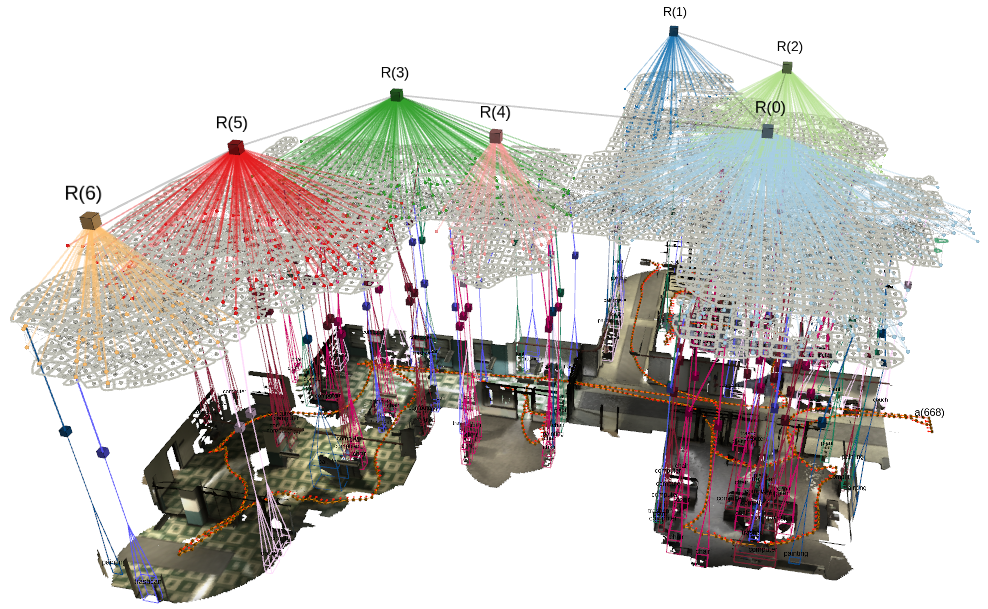}
    \caption{Hierarchical 3D scene graph of the uHumans2 \cite{uHumans2} office environment generated using Hydra \cite{hydra}. The \textsf{Rooms} and \textsf{Places} layers provide the higher- and lower-level graph representations used by ATOM for repeated navigation.}
    \label{fig:uhumans2_sg}
\end{figure}

We compare \emph{Global Places A*}, which performs shortest-path search over the complete lower-level graph for each query; \emph{Full Graph MDP}, which independently solves the complete lower-level MDP for each query; \emph{ATOM-NoReuse}, which regenerates required local policies for each query; \emph{ATOM-Fixed}, which reuses options with fixed costs $C_{ij}^{(0)}$; and full \emph{ATOM}, which additionally refines $C_{ij}^{(t)}$. ATOM variants solve independent local MDPs using up to four processors, while Full Graph MDP solves one monolithic value-iteration problem.

We performed 10 trials of 500 randomly sampled valid start-goal queries, using identical query sequences across methods and resetting the option library between trials. Planning time includes graph/MDP preparation, higher-level search, value iteration, option generation or retrieval, transition-cost updates, final goal-region planning, and path extraction; graph loading, query generation, logging, plotting, and execution are excluded.
The results are shown in Table~\ref{tab:simulation_results} and Figure~\ref{fig:runtime}.

We additionally report the number of MDP solves, solved-state volume, and trajectory cost. Solved-state volume sums the state counts of all MDPs actually solved, so reused options incur no additional volume. For trajectory quality, Global Places A* is used to obtain the shortest-path reference, and the relative trajectory-cost increase is evaluated as
\begin{equation}
    \Delta L =
    \frac{L-L_{\mathrm{A^*}}}{L_{\mathrm{A^*}}}\times100\%.
    \label{eq:trajectory_gap}
\end{equation}
With deterministic Euclidean costs, Global Places A* and Full Graph MDP obtain the same minimum trajectory cost up to floating-point precision.

\begin{figure}[t]
    \centering
    \includegraphics[width=\columnwidth]{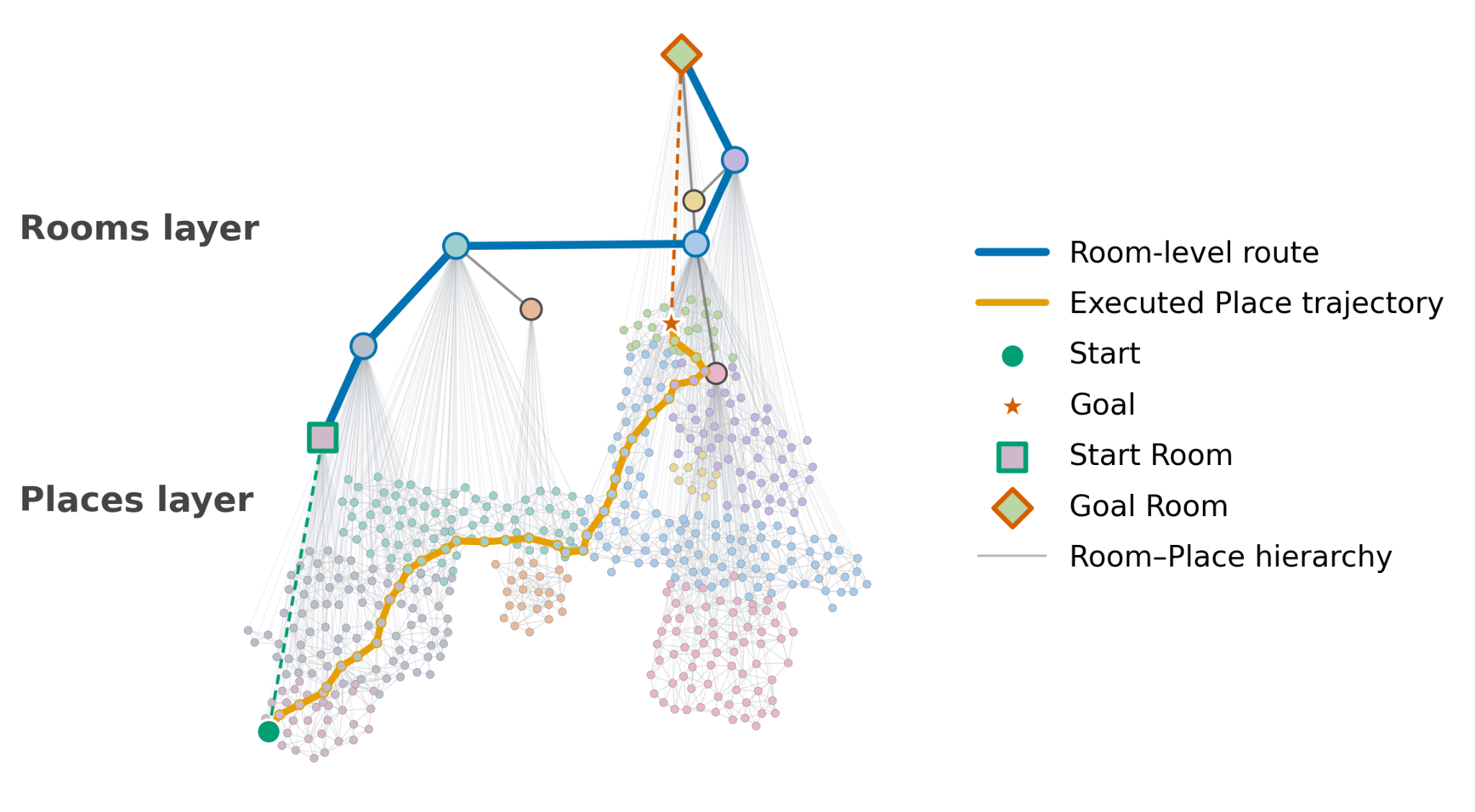}
    \caption{Representative ATOM query in uHumans2. 
    The \textsf{Rooms} layer shows the abstract route and the \textsf{Places} layer the executed trajectory.}
    \label{fig:uhumans_trajectory}
\end{figure}

\subsection{Computational Efficiency and Option Reuse}

Figure~\ref{fig:uhumans_trajectory} illustrates ATOM's hierarchical decomposition. Table~\ref{tab:simulation_results} summarizes repeated-navigation performance, with Full Graph MDP providing the computational reference for repeated policy generation.

\begin{table}[t]
\centering
\caption{Performance over 10 trials of 500 queries (mean $\pm$ sample standard deviation).}
\label{tab:simulation_results}
\resizebox{\columnwidth}{!}{
\begin{tabular}{lcccc}
\hline
\textbf{Method} & \textbf{Time (s)} & \textbf{MDP Solves} & \textbf{Solved-State Volume} & \boldmath$\Delta L$ \textbf{(\%)} \\
\hline
Global Places A* & $0.144\!\pm\!0.009$ & -- & -- & $0.000\!\pm\!0.000$ \\
Full Graph MDP & $4.955\!\pm\!0.077$ & $500.0\!\pm\!0.0$ & $246000\!\pm\!0$ & $0.000\!\pm\!0.000$ \\
ATOM-NoReuse & $1.945\!\pm\!0.056$ & $842.8\!\pm\!18.8$ & $77223\!\pm\!1512$ & $7.504\!\pm\!0.998$ \\
ATOM-Fixed & $0.102\!\pm\!0.008$ & $17.1\!\pm\!0.3$ & $1318\!\pm\!9$ & $7.504\!\pm\!0.998$ \\
ATOM & $0.122\!\pm\!0.010$ & $17.1\!\pm\!0.3$ & $1318\!\pm\!9$ & $7.303\!\pm\!0.952$ \\
\hline
\end{tabular}}
\end{table}

\begin{figure}[t]
    \centering
    \includegraphics[width=\columnwidth]{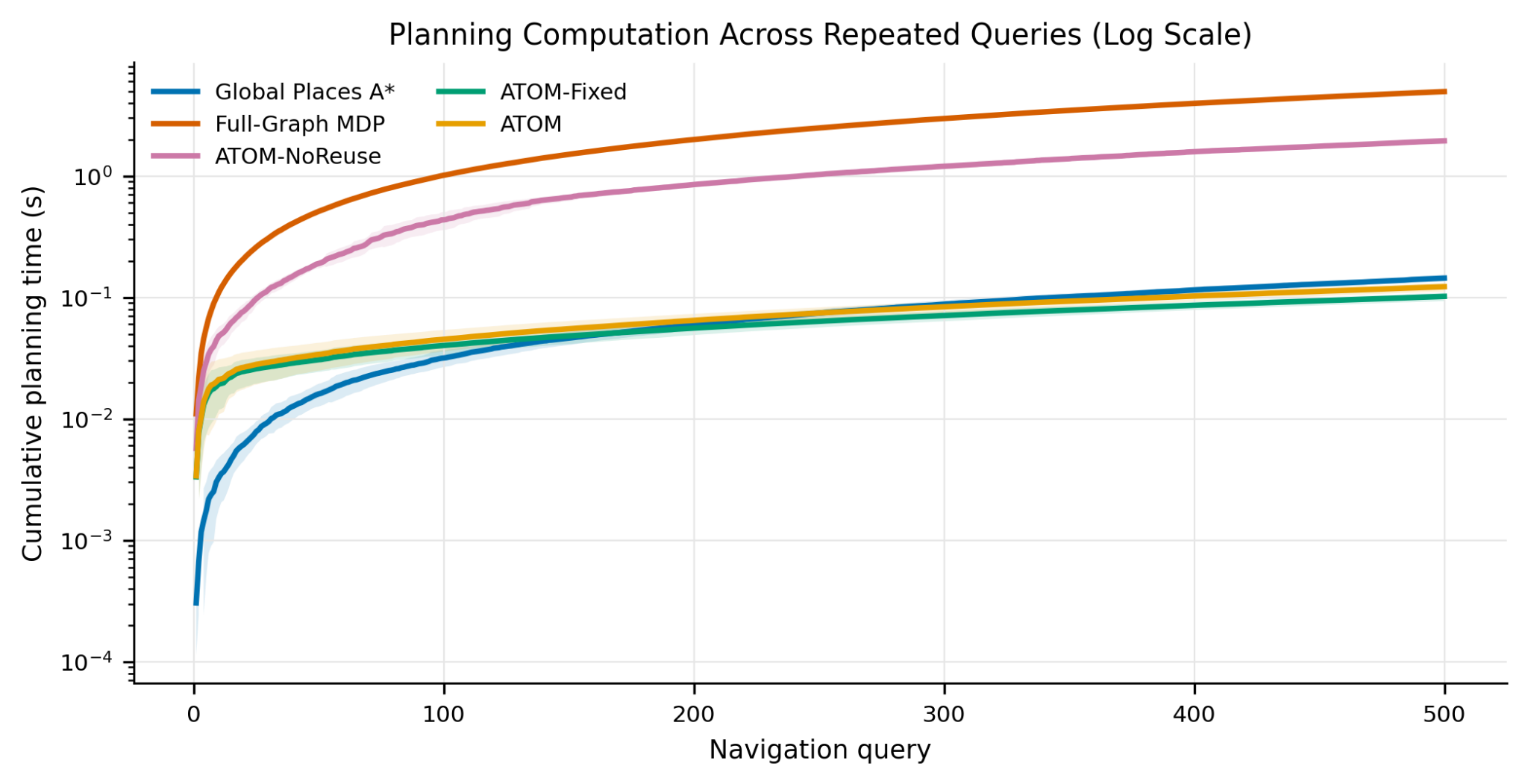}
    \caption{Cumulative planning time over 500 queries on a logarithmic scale (mean $\pm$ one sample standard deviation across 10 trials).}
    \label{fig:runtime}
\end{figure}

ATOM reduces cumulative planning time by $97.5\%$ and solved-state volume by $99.5\%$ relative to Full Graph MDP. Relative to ATOM-NoReuse, option reuse reduces planning time by $93.7\%$ and solved-state volume by $98.3\%$. Across 500 queries, only $17.1$ distinct inter-Room option MDPs are generated on average, while $98.0\%$ of executed inter-Room options reuse stored solutions. Because the query-specific connection to the exact goal is computed using A* within the final region, no additional goal-conditioned MDP is required for each query. ATOM-Fixed and ATOM require $0.102$ and $0.122$~s of cumulative planning time, respectively, showing that experience-informed transition-cost refinement adds little computational overhead.

As shown in Figure~\ref{fig:runtime}, Full Graph MDP and ATOM-NoReuse accumulate substantially greater computation, whereas option reuse slows the growth of cumulative planning time for ATOM-Fixed and ATOM. ATOM requires $0.122\pm0.010$~s over 500 queries, compared with the $0.144\pm0.009$~s required by Global Places A*. Unlike Global Places A*, which performs a new global search for each query, ATOM amortizes option-generation cost through reuse, resulting in lower incremental planning cost once reusable options have been generated. This computational efficiency comes with a modest trajectory-cost increase relative to the Global Places A* shortest-path reference: ATOM produces trajectories $7.30\%\pm0.95\%$ above the reference, compared with $7.50\%\pm1.00\%$ for both ATOM-NoReuse and ATOM-Fixed.

\subsection{Experience-Informed Transition Refinement}

We evaluate whether option experience improves abstract transition-cost estimates using the uHumans2 repeated-navigation experiments described above. For each option--trial pair executed at least 20 times, the final $30\%$ of executions provide an empirical reference from realized initiation states, while the preceding $70\%$ reconstruct the evolving estimator. This yields 126 eligible option--trial pairs and a common horizon of 14 observed executions.

\begin{figure}[t]
    \centering
    \includegraphics[width=0.85\columnwidth]{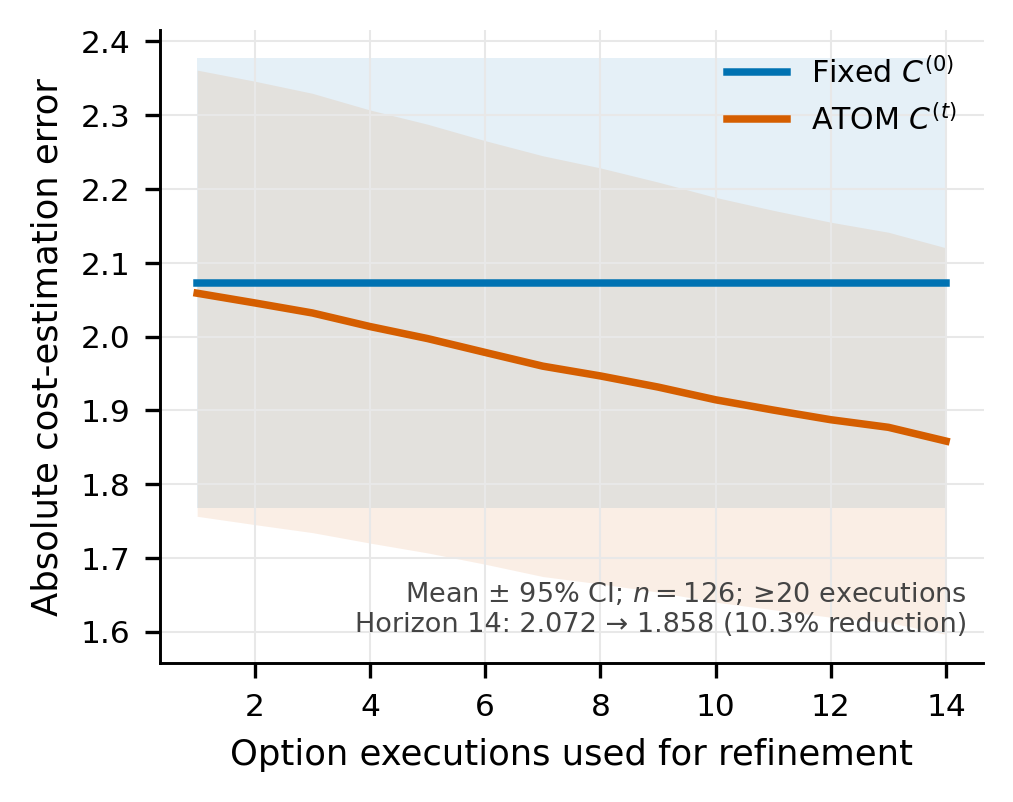}
    \caption{Transition-cost estimation error as option experience accumulates (mean $\pm$ 95\% CI across 126 eligible option--trial pairs). ATOM reduces the error from $2.072$ to $1.858$ after 14 executions.}
    \label{fig:cost_estimation}
\end{figure}

As shown in Figure~\ref{fig:cost_estimation}, mean absolute error decreases from $2.072$ for the fixed estimate $C_{ij}^{(0)}$ to $1.858$ after 14 observed executions, a $10.3\%$ reduction. This improvement results solely from reweighting the fixed $J_{ij}$ using observed initiation states; the underlying option policies and cost-to-go remain unchanged.

Because the uHumans2 Room graph provides few competing abstract routes, we additionally use the four-Room graph in Figure~\ref{fig:refinement} to isolate whether refinement can alter higher-level decisions. Two routes connect $R_1$ and $R_4$, with all option models generated before repeated navigation. ATOM and ATOM-Fixed therefore differ only in transition-cost refinement.

\begin{figure}[t]
    \centering
    \begin{minipage}{\columnwidth}
        \centering
        \includegraphics[width=\columnwidth]{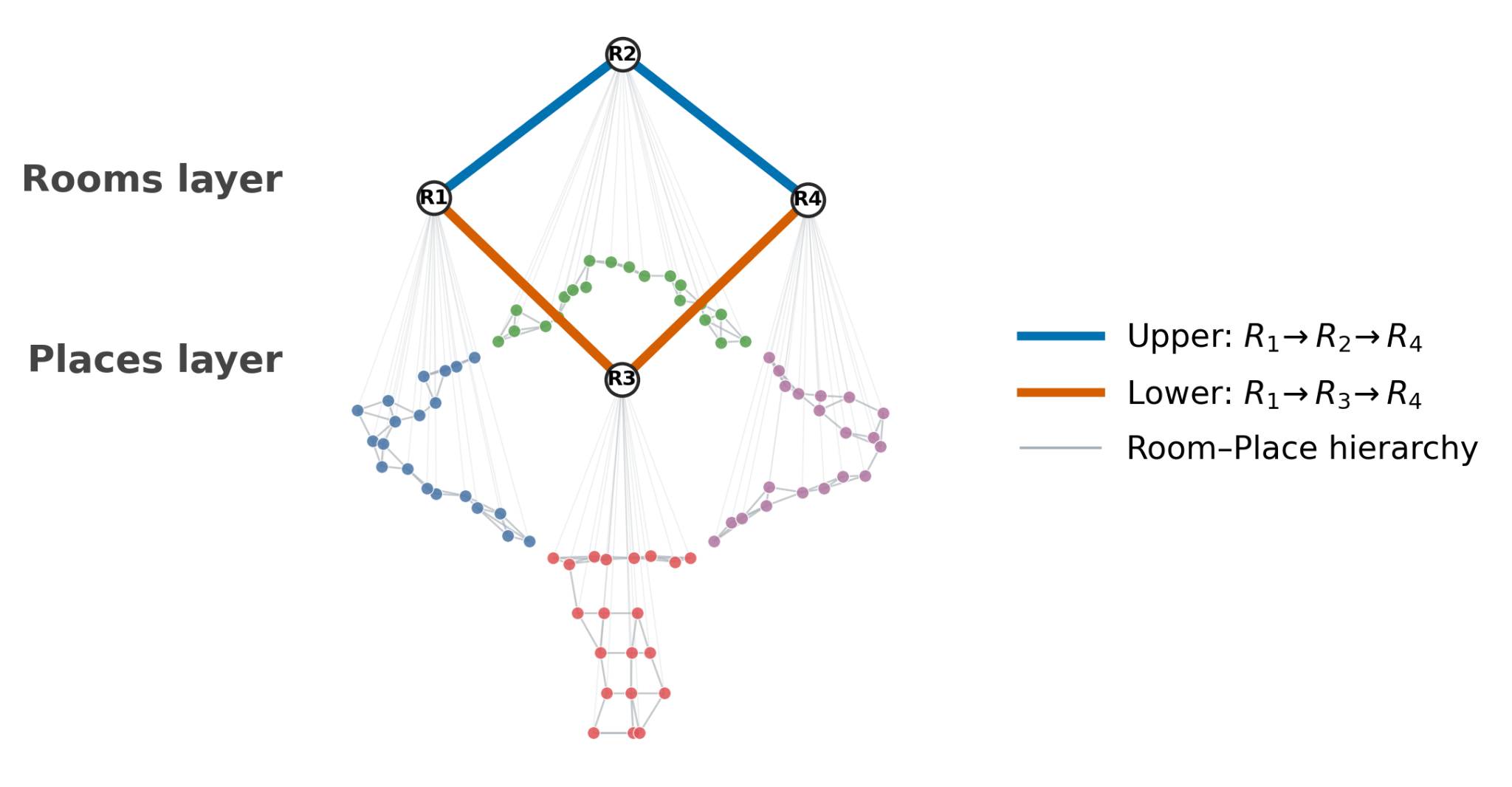}\\[-1mm]
        {\small (a)}\\[1mm]
        \includegraphics[width=\columnwidth]{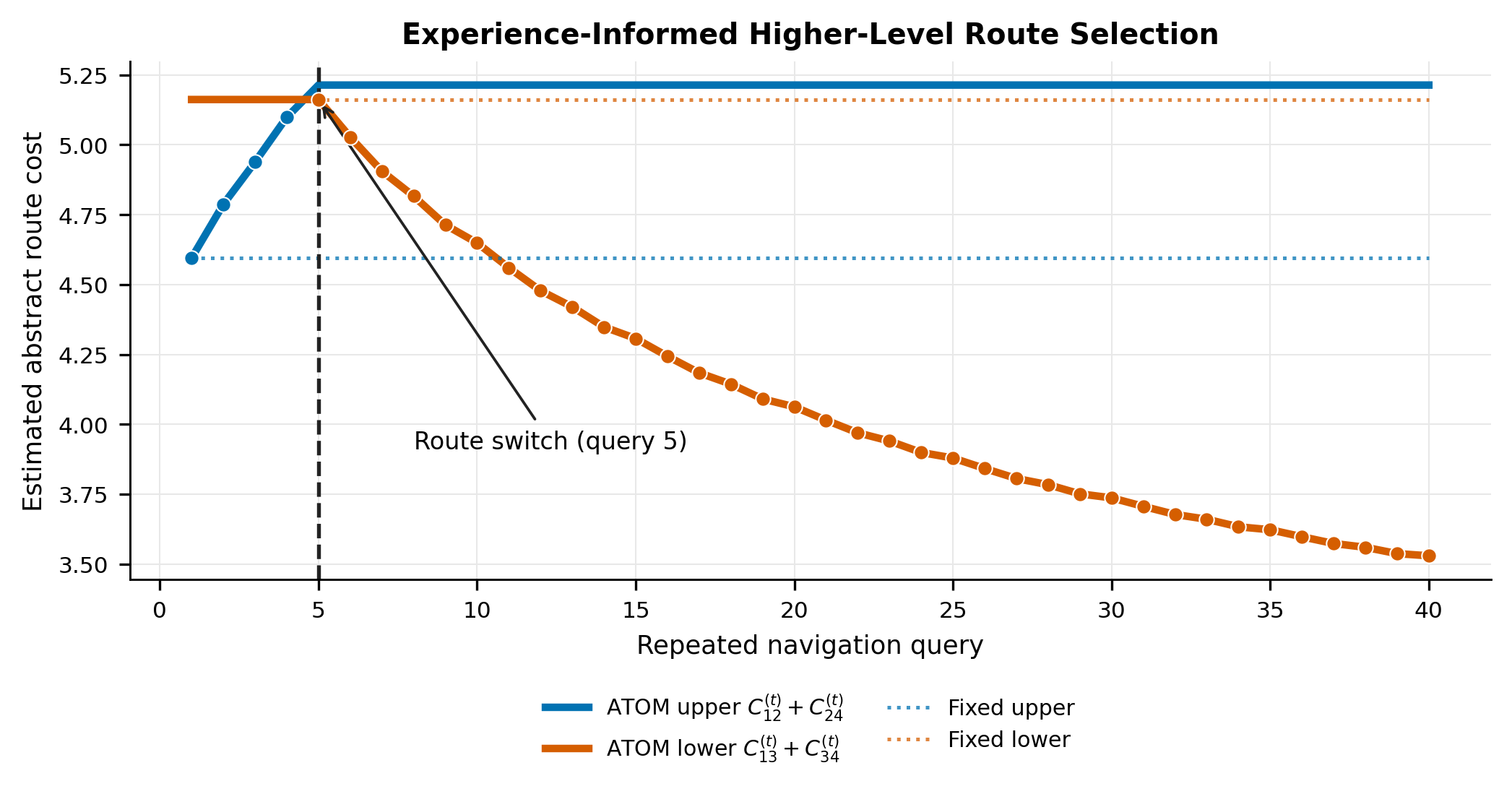}\\[-1mm]
        {\small (b)}
    \end{minipage}
    \caption{Controlled competing-route experiment. (a) Two alternative routes connect $R_1$ and $R_4$. (b) Refinement reverses their estimated cost ordering, causing ATOM to switch routes at query 5; fixed costs retain the original route.}
    \label{fig:refinement}
\end{figure}

Both methods initially select $R_1\!\rightarrow\!R_2\!\rightarrow\!R_4$, with estimated cost $4.595$ versus $5.162$ for $R_1\!\rightarrow\!R_3\!\rightarrow\!R_4$. Refinement reverses this ordering, causing ATOM to switch to the lower route at query 5 and retain it through query 40. ATOM-Fixed selects the original route for all 40 queries. This experiment confirms that refinement can change the abstract route selected on subsequent queries.

\subsection{Real-World Computational Evaluation}

We further evaluated ATOM using an F1Tenth vehicle \cite{f1tenth} equipped with an Intel RealSense D455, 2D lidar, and NVIDIA Jetson Orin Nano 8GB Developer Kit (Figure~\ref{fig:f1tenth}) on the second floor of the Van Leer Building at the Georgia Institute of Technology. Sensor data and an optimized SLAM trajectory were collected onboard, \textbf{with representative camera frames and segmentation overlays shown in Figure~\ref{fig:van_leer_segmentation},} from which the hierarchical 3DSG shown in Figure~\ref{fig:van_leer_sg} was reconstructed offline using Hydra on the same PC used for the simulation experiments and subsequently deployed to the F1Tenth for onboard ATOM planning. The graph was post-processed by treating corridor segments as abstract regions and reassigning their associated \textsf{Places} nodes. The resulting graph contains $1323$ \textsf{Places} nodes in its largest connected component and remains fixed throughout the experiments. A representative query is shown in Figure~\ref{fig:physical_graph}.

\begin{figure}[t]
    \centering
    \includegraphics[width=0.7\columnwidth]{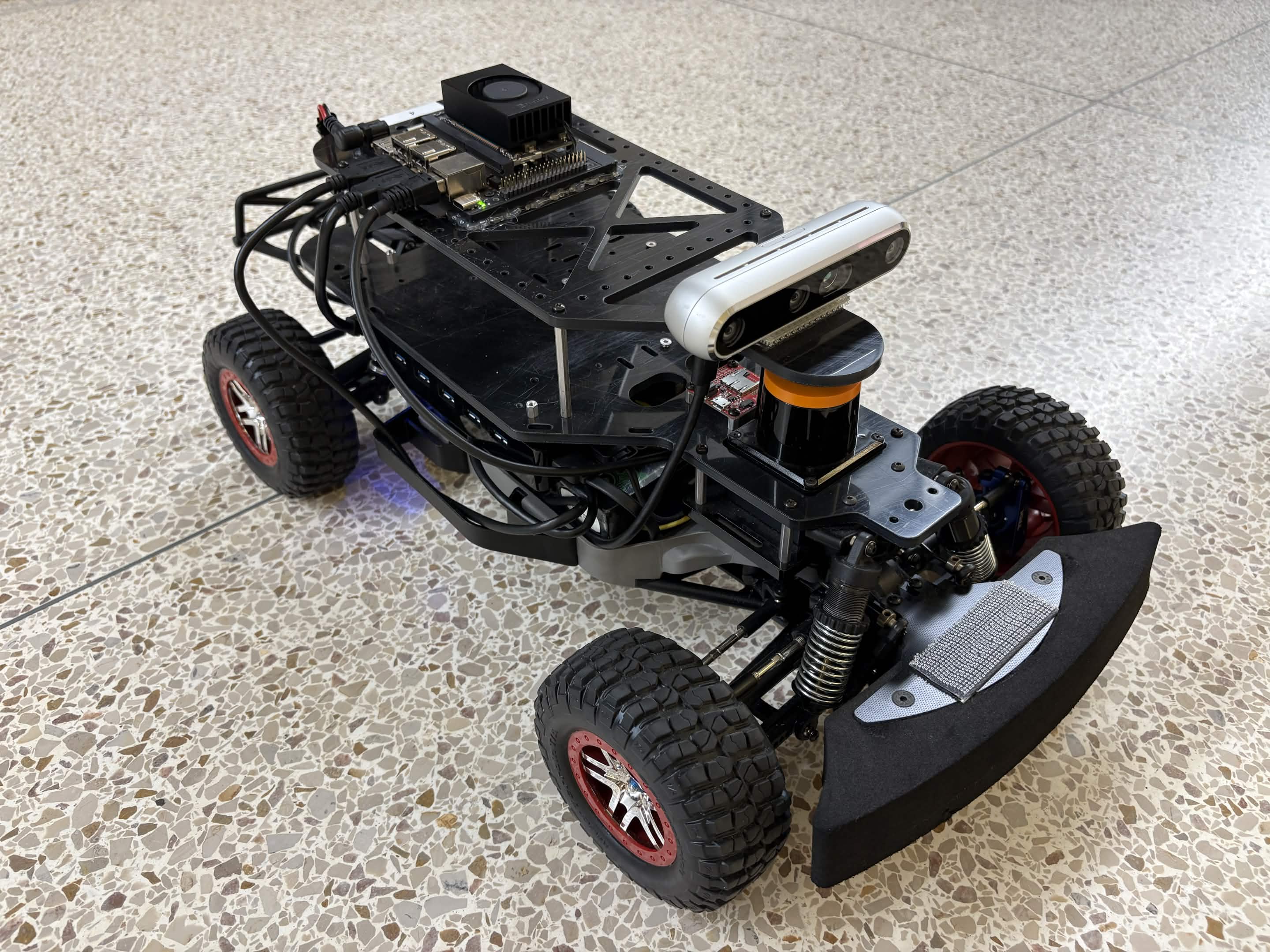}
    \caption{F1Tenth experimental platform with Intel RealSense D455 and NVIDIA Jetson Orin Nano.}
    \label{fig:f1tenth}
\end{figure}

\begin{figure}[t]
    \centering
    \includegraphics[width=\columnwidth]{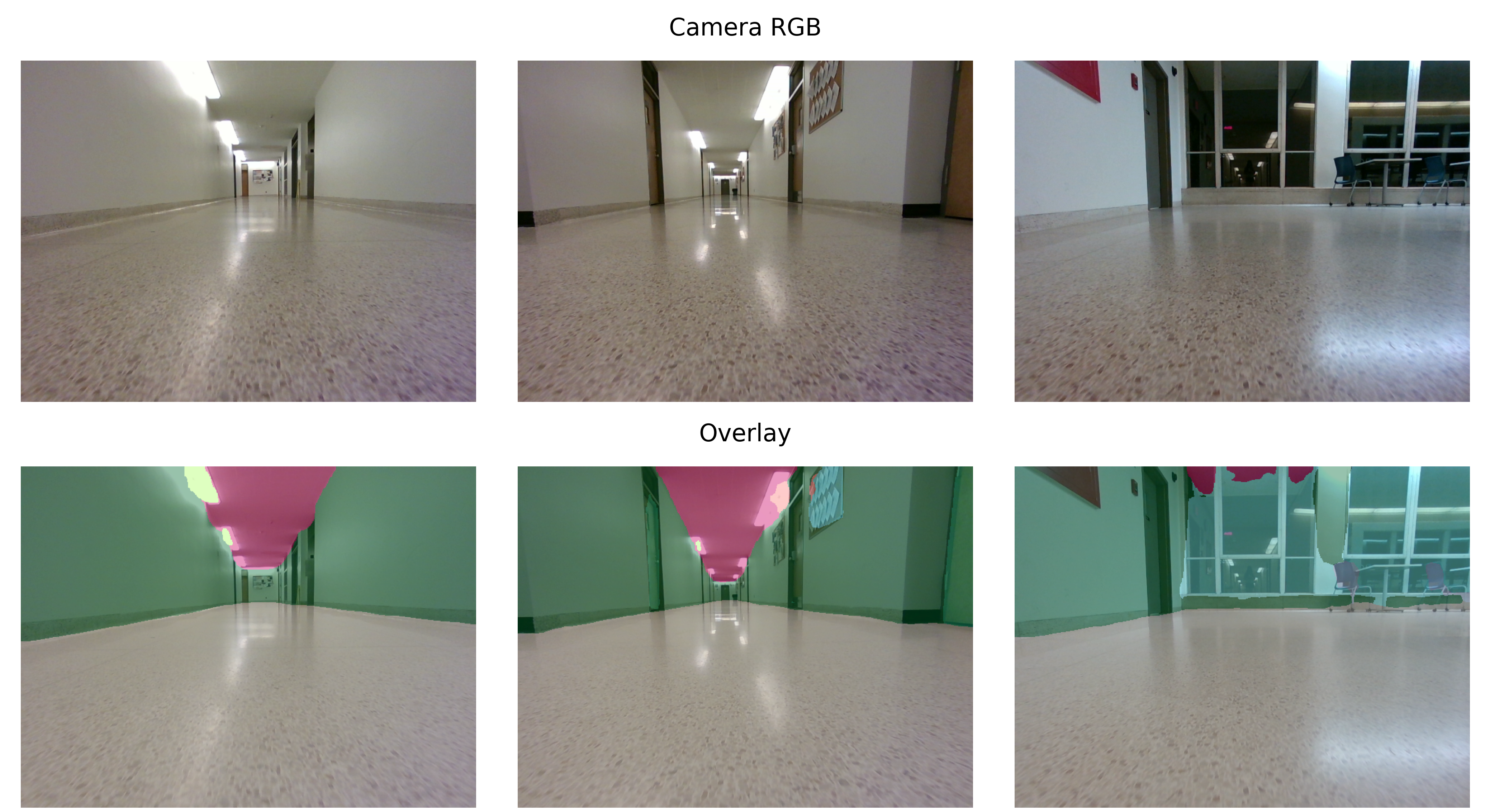}
    \caption{Representative camera observations collected in the Georgia Tech Van Leer Building. Top: RGB frames from the F1Tenth's Intel RealSense D455. Bottom: corresponding semantic segmentation overlays used during scene-graph reconstruction.}
    \label{fig:van_leer_segmentation}
\end{figure}

\begin{figure}[t]
    \centering
    \includegraphics[width=0.8\columnwidth]{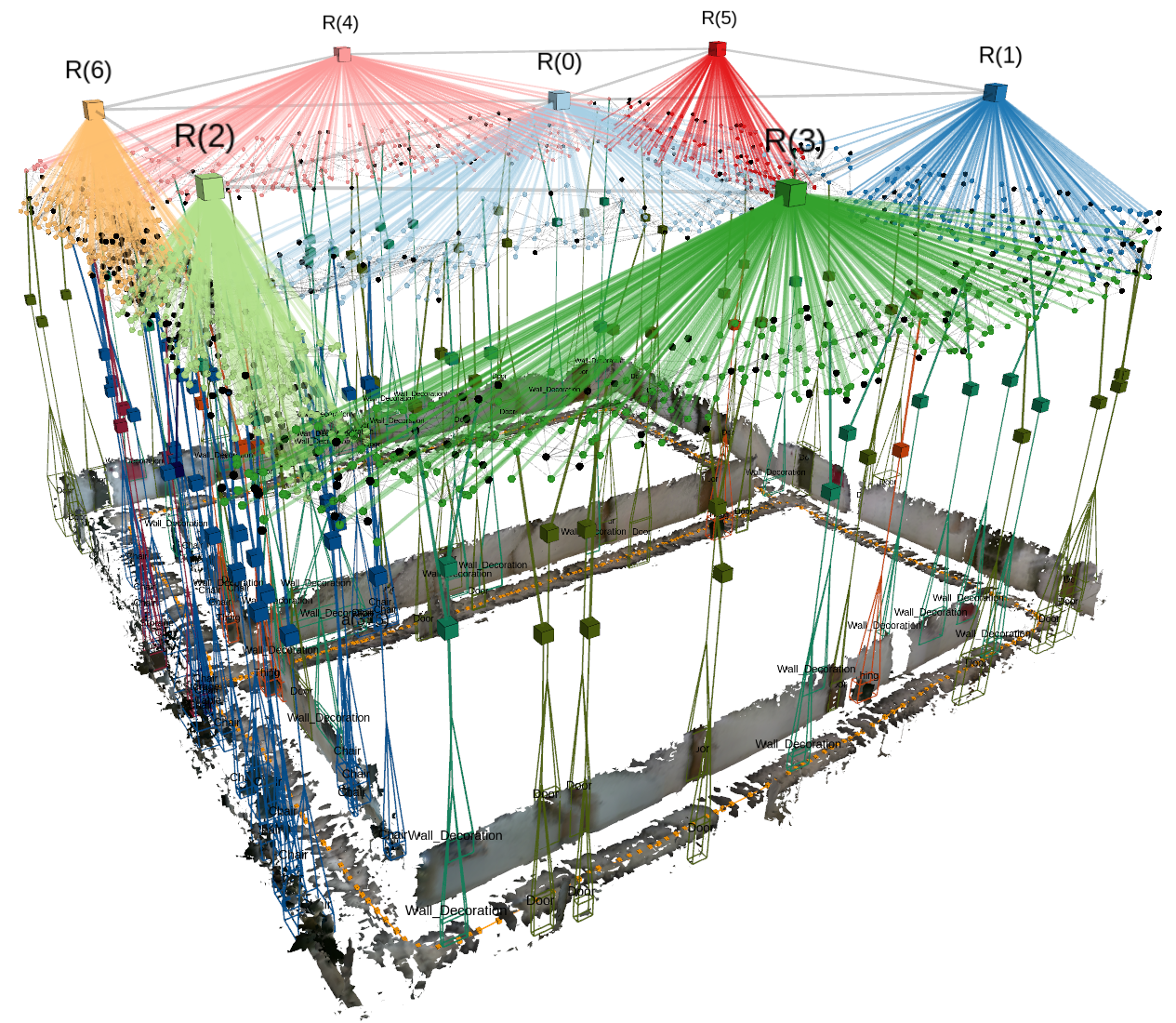}
    \caption{Hierarchical 3D scene graph of the second floor of the Georgia Tech Van Leer Building generated using Hydra. The visualization shows the reconstructed environment together with the abstract region nodes and their associated lower-level graph structure used to instantiate ATOM.}
    \label{fig:van_leer_sg}
\end{figure}

We first evaluate planning on this real-world graph using five trials of 30 random start--goal queries sampled from its largest connected \textsf{Places} component. The option library is reset between trials and retained within each trial. Full Graph MDP solves the complete $1323$-state graph independently for every query, whereas ATOM generates inter-region options as needed and reuses their stored solutions. All 150 queries were successfully evaluated. The resulting planning-time, trajectory-quality, and option-reuse statistics are summarized in Figure~\ref{fig:physical_results}.

Across the 150 queries, ATOM requires $0.496$~sec/query compared with $18.177$~sec/query for Full Graph MDP, a $97.3\%$ reduction, while reducing solved states per query from $1323$ to approximately $100$ ($92.5\%$). ATOM trajectories are on average $9.39\%$ longer than the Global Places A* reference. Across inter-region option executions, 122 reuse previously generated solutions and 77 require new option generation, corresponding to a $61.3\%$ reuse rate. The onboard results show a similar repeated-planning advantage on the larger real-world graph.

\begin{figure}[t]
    \centering
    \includegraphics[width=\columnwidth]{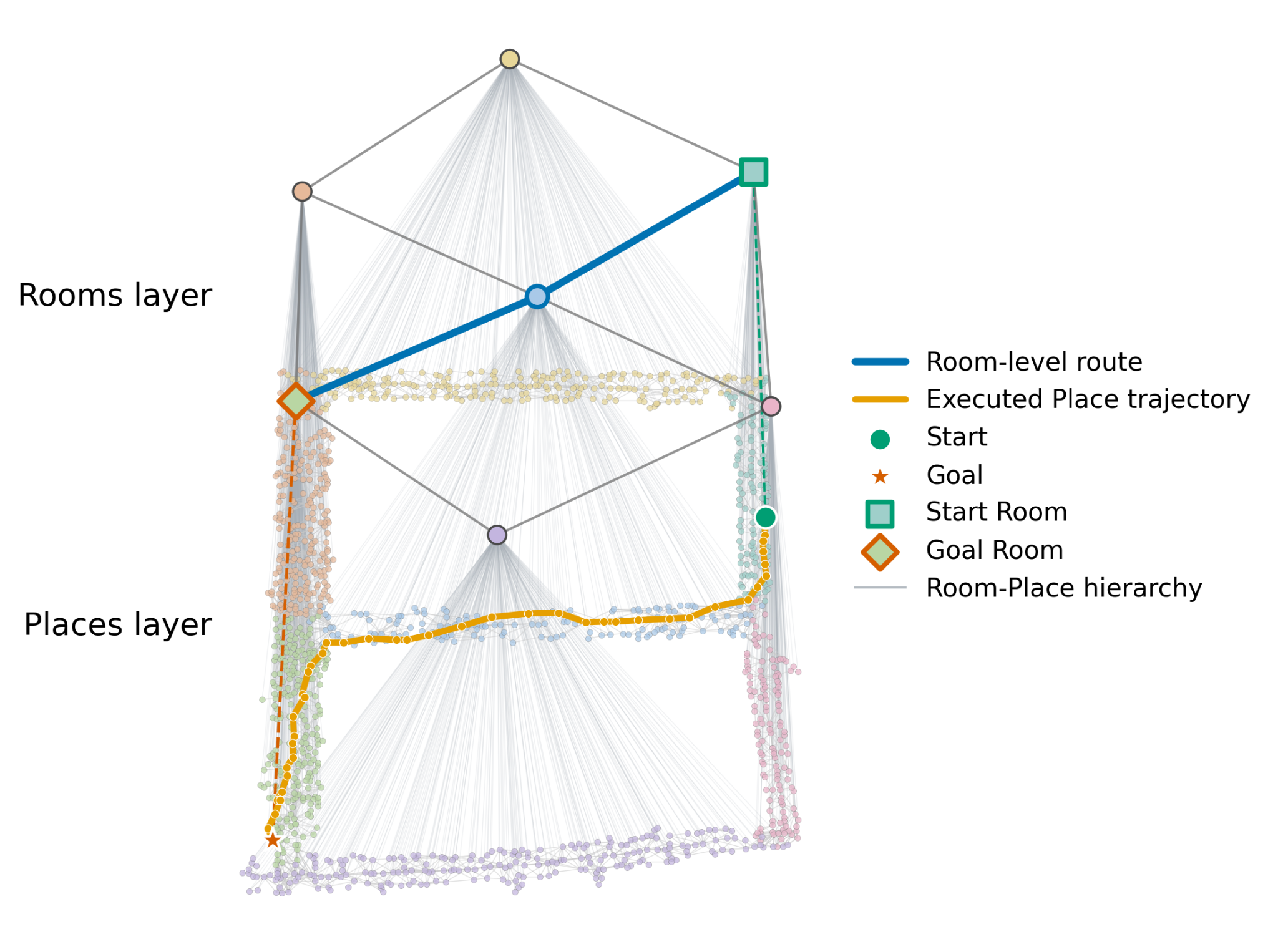}
    \caption{Representative ATOM query on the real-world graph of the second floor of the Georgia Tech Van Leer Building, showing the abstract route (blue) and lower-level trajectory (orange).}
    \label{fig:physical_graph}
\end{figure}

\begin{figure}[t]
    \centering
    \includegraphics[width=\columnwidth]{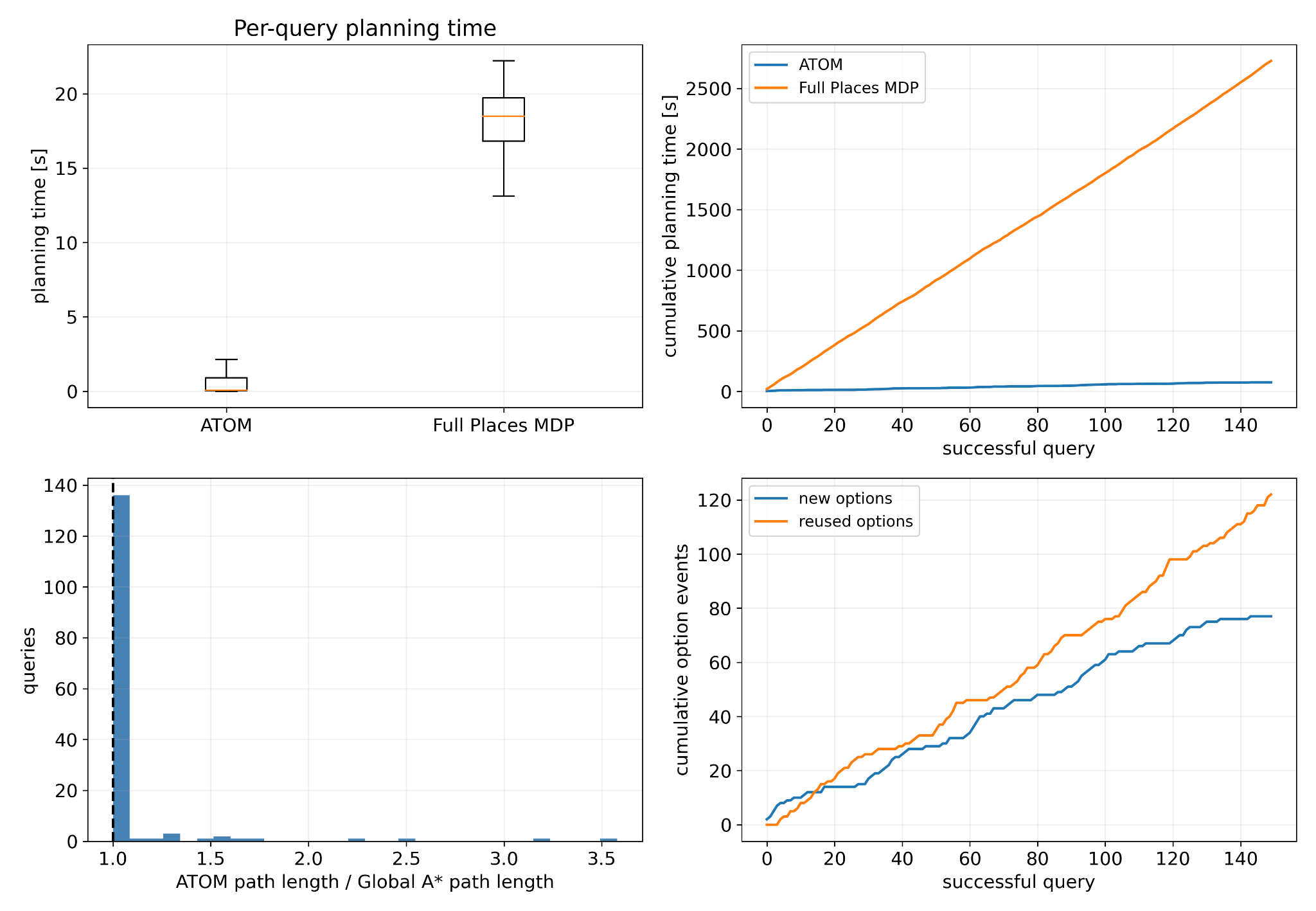}
    \caption{Onboard planning performance on the Van Leer graph over 150 successful start--goal queries. Top left: per-query planning-time distributions for ATOM and Full Graph MDP. Top right: cumulative planning time across successful queries. Bottom left: distribution of the ratio between ATOM trajectory length and the Global Places A* shortest-path reference. Bottom right: cumulative numbers of newly generated and reused inter-region options.}
    \label{fig:physical_results}
\end{figure}

\subsection{Physical Navigation Validation}

Finally, we validated ATOM during physical navigation using five sequential navigation queries selected to produce repeated traversal of the same abstract transitions. The option library is initialized empty and retained across queries, allowing options generated during earlier navigation tasks to be reused in subsequent ones. For each query, ATOM plans the abstract \textsf{Room} route and executes the associated option policies to obtain a \textsf{Place}-level reference path. To accommodate the F1Tenth's Ackermann steering constraints, this discrete path is converted into a continuous, smoothed path while preserving ATOM's selected regions and corridor transitions, with adjustments made where necessary for feasible cornering and map clearance. 
The ROS 2 Navigation Stack (Nav2) \cite{nav2}
tracks the resulting path using a regulated pure-pursuit controller, while Adaptive Monte Carlo Localization (AMCL) \cite{amcl} 
provides localization on the SLAM occupancy map.

All five navigation queries were successfully completed. Of seven inter-region option executions, three reused previously generated solutions, corresponding to a $\sim43\%$ reuse rate. Figure~\ref{fig:physical_navigation} compares representative ATOM-planned paths with the corresponding AMCL-estimated executed trajectories. These results demonstrate that ATOM's reusable option policies can generate routes that can be executed repeatedly on a physical Ackermann-steered platform.

\begin{figure}[t]
    \centering
    \includegraphics[width=\columnwidth]{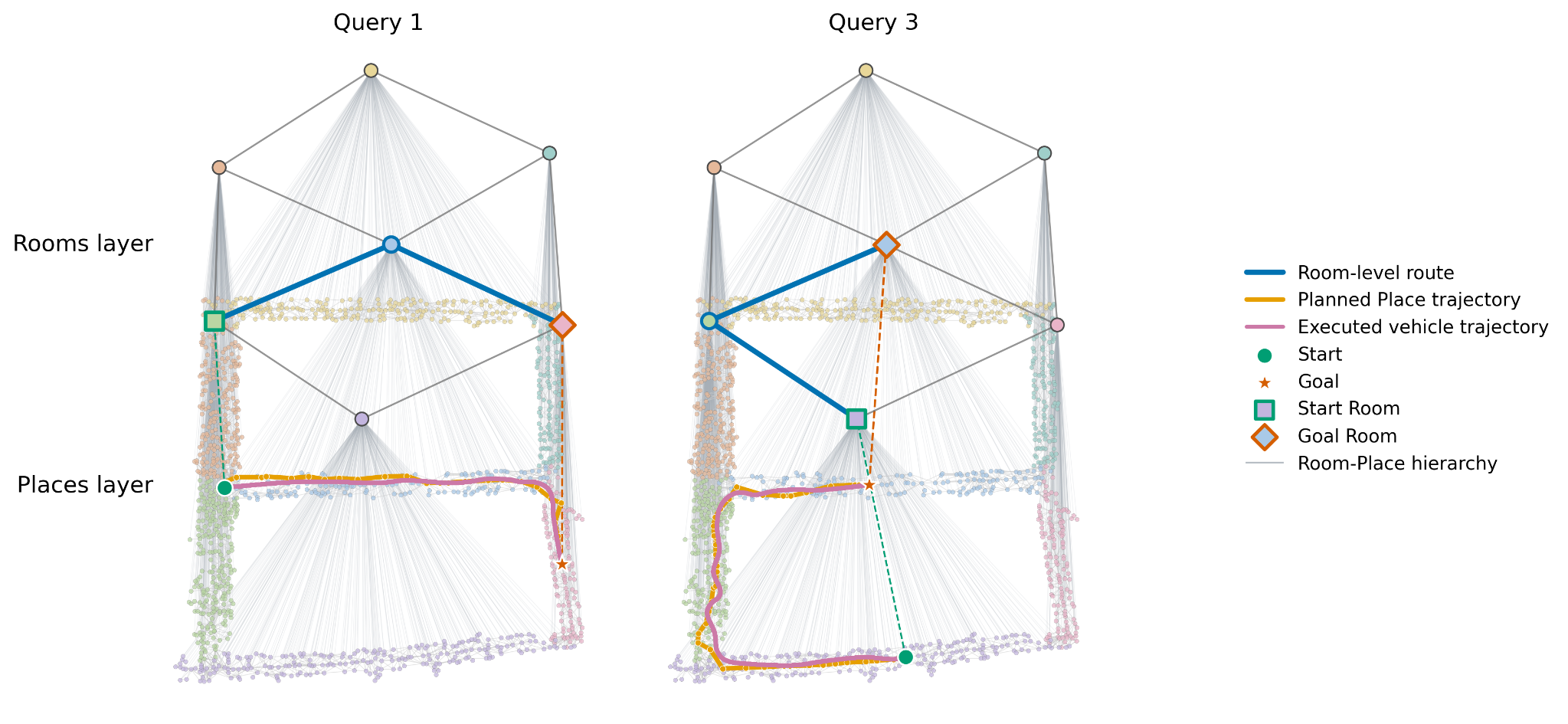}
    \caption{Physical navigation validation for representative queries. Each panel shows the abstract \textsf{Room}-level route, ATOM-planned \textsf{Place} trajectory, and executed vehicle trajectory. Query 3 includes reuse of a previously generated option.}
    \label{fig:physical_navigation}
\end{figure}

\section{Discussion and Conclusion}

We have presented ATOM, a hierarchical framework that represents directed abstract transitions as reusable navigation options. Option policies and their state-dependent cost-to-go are computed once and reused across navigation tasks, while observed initiation states progressively refine the transition costs used for subsequent higher-level planning.

In simulation, ATOM reduced cumulative planning time by $97.5\%$ relative to repeated monolithic MDP planning while producing trajectories within approximately $7\%$ of the Global Places A* shortest-path reference. Experience-informed refinement reduced held-out transition-cost estimation error by $10.3\%$ and changed the selected abstract route in the controlled competing-route experiment. On the larger $1323$-\textsf{Place} real-world graph, onboard planning reduced mean planning time by $97.3\%$ and solved-state volume by $92.5\%$ relative to Full Graph MDP. Physical navigation further validated option reuse on the F1Tenth, successfully completing all five navigation queries with a $43\%$ inter-region option reuse rate.

The present formulation assumes a fixed hierarchical graph and lower-level transition model. Future work will address changing environments through selective updates of affected option models, incorporate semantic traversal information, and integrate ATOM with evolving scene graphs.

\section*{ACKNOWLEDGMENT}

This work was supported by ARL award DCIST CRA W911NF-17-2-0181 and ONR award N00014-23-1-2304.

\bibliographystyle{IEEEtran}
\bibliography{ref}

\end{document}